% This is samplepaper.tex, a sample chapter demonstrating the
% LLNCS macro package for Springer Computer Science proceedings;
% Version 2.21 of 2022/01/12
%
\documentclass[runningheads]{llncs}
\usepackage[T1]{fontenc}
% T1 fonts will be used to generate the final print and online PDFs,
% so please use T1 fonts in your manuscript whenever possible.
% Other font encondings may result in incorrect characters.
%
\usepackage{graphicx}
% Used for displaying a sample figure. If possible, figure files should
% be included in EPS format.
%
% If you use the hyperref package, please uncomment the following two lines
% to display URLs in blue roman font according to Springer's eBook style:
%\usepackage{color}
%\renewcommand\UrlFont{\color{blue}\rmfamily}
%\urlstyle{rm}
%

\usepackage{amsmath,amssymb}
\usepackage{mathtools}
\usepackage{enumerate}
\usepackage{xcolor}
\usepackage{stmaryrd} % Where \llbracket and \rrbracket are defined
\usepackage{xspace}
\usepackage{tikz}
\usepackage{pgfplots}
\usetikzlibrary{automata,positioning,arrows.meta,calc}
\usepackage{subcaption}
\usepackage{array}
\usepackage{booktabs}
\usepackage{wrapfig}

\usepackage{algorithm,algpseudocode,algorithmicx}

\algdef{SE}[DOWHILE]{Do}{doWhile}{\algorithmicdo}[1]{\algorithmicwhile\ #1}%

%% comments
%\newcommand{\KM}[1]{{\color{red}\textbf{KM:} #1}}
%\newcommand{\AR}[1]{{\color{blue}\textbf{AR:} #1}}
%\newcommand{\CS}[1]{{\color{green!50!black}\textbf{CS:} #1}}
%\newcommand{\todo}[1]{{\color{orange}\textbf{TODO:} #1}}

% generic
\newcommand{\tup}[1]{\ensuremath{\left(#1\right)}}
\newcommand{\set}[1]{\ensuremath{\left\lbrace #1\right\rbrace}}
\newcommand{\pref}{\mathit{Pref}}

% control system
\newcommand{\Sys}{\Sigma}
\newcommand{\Xs}{\mathcal{X}}
\newcommand{\x}{x}

\newcommand{\Us}{\mathcal{U}}
\renewcommand{\u}{u}
\newcommand{\Ws}{\mathcal{W}}
\newcommand{\w}{w}
\newcommand{\f}{f}
\newcommand{\tran}[1]{\xrightarrow{#1}}
\newcommand{\traj}{\xi}
\newcommand{\path}{\rho}

\newcommand{\Cont}{\mathcal{C}}
\newcommand{\Contmax}{\Cont^*}
\newcommand{\paths}[3]{\mathit{Paths}(#1,#2,#3)}
\newcommand{\dom}[1]{\mathit{Dom}(#1)}

\newcommand{\Contfc}{\overline{\mathcal{C}}}

% abstraction
\newcommand{\abs}[1]{\widehat{#1}}
\newcommand{\Sysh}{\abs{\Sigma}}
\newcommand{\Xsh}{\abs{\mathcal{X}}}
\newcommand{\xh}{\abs{x}}

\newcommand{\Ush}{\abs{\mathcal{U}}}
\newcommand{\uh}{\abs{u}}

\newcommand{\fh}{\abs{f}}
\newcommand{\Phih}{\abs{\Phi}}
\newcommand{\Conth}{\abs{\Cont}}

\newcommand{\Ghi}{\abs{G_i}}

\newcommand{\cpre}{\mathit{CPre}}

% shielding
\newcommand{\Shield}{\mathcal{S}}

% specifications
\newcommand{\safety}{\mathit{Safety}}
\newcommand{\psafety}{\mathrm{P}\text{-}\mathit{Safety}}

% names
\newcommand{\SafetySynt}{\texttt{SafetyControl}\xspace}
\newcommand{\NBSynt}{\texttt{NBControl}\xspace}
\newcommand{\nmp}{n.m.p.\xspace}

\begin{document}
\title{Efficient Dynamic Shielding\\ for Parametric Safety Specifications}
%\title{Efficient Adaptive Shielding}
%\title{Generalized Shielding\\ via Abstraction-Based Control}
%
%\titlerunning{Abbreviated paper title}
% If the paper title is too long for the running head, you can set
% an abbreviated paper title here
%
\author{Davide Corsi \inst{1} \and
Kaushik Mallik \inst{2} \and
Andoni Rodr\'iguez \inst{2} \and
C\'esar S\'anchez \inst{2}}
%
%\authorrunning{F. Author et al.}
% First names are abbreviated in the running head.
% If there are more than two authors, 'et al.' is used.
%
\institute{University of California, Irvine, USA\\
\email{dcorsi@uci.edu}\and
IMDEA Software Institute, Spain\\
\email{\{kaushik.mallik,andoni.rodriguez,cesar.sanchez\}@imdea.org}}
\maketitle              % typeset the header of the contribution
\begin{abstract}
Shielding has emerged as a promising approach for ensuring safety of AI-controlled autonomous systems.
The algorithmic goal is to compute a shield, which is a runtime safety enforcement tool that needs to monitor and intervene the AI controller's actions if safety could be compromised otherwise.
Traditional shields are  designed statically for a specific safety requirement.
Therefore, if the safety requirement changes at runtime due to changing operating conditions, the shield needs to be recomputed from scratch, causing delays that could be fatal.
We introduce \emph{dynamic shields} for \emph{parametric} safety specifications, which are succinctly represented sets of all possible safety specifications that may be encountered at runtime.
Our dynamic shields are statically designed for a given safety parameter set, and are able to dynamically adapt as the true safety specification (permissible by the parameters) is revealed  at runtime.
The main algorithmic novelty lies in the dynamic adaptation procedure, which is a simple and fast algorithm that utilizes known features of standard safety shields, like maximal permissiveness.
We report experimental results for a robot navigation problem in unknown territories, where the safety specification evolves as new obstacles are discovered at runtime.
In our experiments, the dynamic shields took a few minutes for their offline design, and took between a fraction of a second and a few seconds for online adaptation at each step, whereas the brute-force online recomputation approach was up to 5 times slower.

%We introduce \emph{parameterized} safety, representing a family of safety requirements any of which can be arbitrarily imposed at any point during runtime.
%The general wisdom would then be to design a dedicated shield for each safety requirement allowed within the given family, and at runtime, use the one that is appropriate for the current safety requirement at each time.
%But the brute-force computation of one shield for each member of the parameterized safety family could be prohibitively expensive.
%We present \emph{generalized shields} for parameterized safety, and show that they are exponentially faster to compute than the brute-force alternative, though the trade-off is a negligible computational overhead at runtime.
%We propose a synthesis algorithm using the well-known abstraction-based control framework, and demonstrate the effectiveness of generalized shields on a robotic path planning task in an unknown environment where the obstacle locations are unknown to the shield apriori.

\keywords{Dynamic shields  \and parametric safety \and symbolic control.}
\end{abstract}

\section{Introduction}

Most critical autonomous systems like self-driving cars are nowadays controlled by machine-learned (ML) controllers, and ensuring their safety is an important agenda in artificial intelligence and formal methods research.
Unfortunately, the traditional static safety verification tools from formal methods usually do not scale to the size and complexity of ML-based systems.
One promising alternative is shielding~\cite{bloem2015shield,alshiekh2018safe,bharadwaj2019synthesis,li2020robust}, where we deploy a formally verified runtime enforcement tool---the \emph{shield}---that monitors the actions of the ML controller and overrides them whenever safety could be at risk.
Usually shield synthesis is cheaper than verifying the entire system, because the synthesis happens on a small system abstraction that concerns only the safety aspects. 
More importantly, the synthesis process treats the ML controller as a black-box, thereby bypassing the scalability issues faced by the traditional model-based formal approaches.
In the recent past, shielding has been successfully applied in tandem with complex machine-learned controllers in a large variety of applications, including safe human-robot interactions~\cite{hu2022sharp} and safe autonomous driving~\cite{raeesi2025safe}.

State-of-the-art shielding approaches offer \emph{statically} designed shields, crafted for a specific safety objective provided at the design time.
In reality, safety specifications often vary over time, and there are no principled approaches to \emph{dynamically} adapt a (statically designed) shield as new safety objectives are uncovered at runtime.
For instance, consider a mobile robot placed in a workspace whose map is unknown apriori.
The workspace is filled with static obstacles, and the robot must avoid colliding with them at all time.
However, the visibility of the robot is limited by the range of its sensors, and therefore it can see the obstacles only when it gets close to them.
%The underlying ML controller with undisclosed motives drives the robot around, and the shield must intervene whenever the ML controller is about to use an unsafe action.
If the entire map were visible to the shield, it could use the locations of the obstacles to define its safety specification.
However, due to limited visibility, the safety specification would only concern the visible immediate neighborhood of the robot, and it would keep changing in real-time as new obstacles are uncovered.

We present a novel framework of \emph{dynamic shielding} with respect to evolving safety specifications.
We assume that we are given a perturbed, discrete-time dynamical model of the system, and a parameterized (finite) set of all possible safety specifications that could be encountered at runtime.
To be more specific, for the specification part, we are given a finite collection of safety objectives of the form $\set{\Box\, G_i}_{i}$, called the \emph{parameter set}, where $G_i$ is a set of safe states of the system, and $\Box\, G_i$ is a linear temporal logic (LTL) formula specifying that $G_i$ must not be left at any time.
Each formula $\Box\, G_i$ represents an \emph{atomic} safety objective, and the actual safety specification encountered at runtime will be the conjunction of an arbitrary subset of atomic safety objectives.
The aim is to statically design a shield for the statically provided parameter set $\set{\Box\, G_i}_{i}$, such that the shield can dynamically adapt itself for every dynamically generated safety specification.

Naturally, shielding against parametric safety would require coordination between the offline and online design phases, and the two extreme ends of the coordination spectrum are as follows.
The pure offline approach would design one (static) shield for each subset of $\set{\Box G_i}_i$, so that the right shield could be deployed at no additional time at runtime.
However, this would require solving an exponential number of offline shield synthesis problems which will not scale if the parameter set is large.
In contrast, the pure online approach would perform no computation in the offline phase, and at runtime, whenever a new safety specification is revealed, it would compute a (static) shield to be deployed immediately.
This would increase the computational delays in the shield deployment, which may not be feasible in systems with fast dynamics.

We present an efficient dynamic shielding algorithm that creates a harmony between the offline and online design phases.
In the offline design phase, one (static) \emph{atomic} shield $\Shield_i$ is computed for each atomic safety specification $\Box G_i$, solving a linear number of shield synthesis problems as opposed to the exponential case of the pure offline algorithm.
In the online deployment phase, as a new safety specification $\Phi = \Box\, G_j\cap \Box\, G_{k} \cap \ldots$ is encountered, the respective atomic shields $\Shield_j,\Shield_k,\ldots$ are \emph{composed} to obtain the shield for the specification $\Phi$.
This composition operation is the main technical novelty of this paper.
It utilizes simple known features like maximal permissiveness of safety shields, giving rise to a fast composition algorithm involving shield ``intersections'' followed by iterative deadlock removals.
As a result, we obtain a lightweight online adaptation procedure that is significantly cheaper than the pure online algorithm, which would instead compute a new shield for $\Phi$ from scratch.

We propose abstraction-based synthesis algorithms for our dynamic shields, though other alternatives could also be pursued~\cite{vidal2000controlled}.
Concretely, we first create an abstract model of the system by following standard procedure~\cite{reissig2016feedback}, namely discretizing the system's state and input space using uniform grids, and then conservatively approximating the system dynamics over the discrete spaces.
Afterwards, we adapt existing abstraction-based synthesis algorithms~\cite{reissig2016feedback} for the offline design and online adaptation of our dynamic shields.
These procedures are compatible with symbolic data structures, particularly binary decision diagrams (BDD), giving rise to efficient implementation of our dynamic shields.

We also provide practical strategies to address the following \emph{safe handover} question that naturally arises:
if the safety specification evolves at each step, how can we be sure that the current actions of the shield will keep the future system states within the domain of subsequent shield adaptations?
We address this question for the specific problem of safe robot navigation in unknown territories, where the shield may encounter previously unknown obstacles from time to time.
We propose the most conservative solution to the safe handover problem, namely at each step, the shield needs to assume that the entire unobservable part of the state space is unsafe.
This is not as restrictive as it sounds, because the faraway obstacles (in the unobservable part) have hardly any influence on the shield's actions.
As the robot starts moving, states which were earlier assumed unsafe turn out to be actually safe, and it is guaranteed that the future states of the robot will be within the domain of future shield adaptations.

Finally, we demonstrate the practical effectiveness and feasibility of the dynamic shields using a prototype implementation based on the tool Mascot-SDS~\cite{majumdar2023flexible}.
The safety rate of the shields were $100\%$, which is unsurprising since they are correct by construction.
Furthermore, the offline design of our dynamic shields ended within minutes, and the online adaption per step on an average took between  a fraction of a second upto a few seconds, which was upto $5$ times faster compared to the pure online baseline.
This demonstrates the practical feasibility of our dynamic shields.

In summary, our contributions are as follows:
\begin{enumerate}[(a)]
	\item We propose the problem of dynamic shielding for evolving safety specifications.
	\item We present a novel algorithm for dynamic shielding, which orchestrates offline shield synthesis with lightweight online adaptation procedure.
	\item We show how our algorithms can be symbolically implemented using the abstraction-based control paradigm.
	\item We present a practical approach to address the safe handover question for dynamic shields in navigation tasks.
	\item We present the superior computational performance of our dynamic shields using a prototype implementation.
\end{enumerate}
\subsection*{Related Works}

Shielding has become one of the enabling technologies in guaranteeing safety of arbitrarily complex machine-learned controllers in autonomous systems~\cite{konighofer2017shield,alshiekh2018safe,bharadwaj2019synthesis,li2020robust,elsayed2021safe}.
It has been studied in two operational settings, namely pre-shielding and post-shielding.
In pre-shielding, the shield is deployed during the learning process, so that the learner does not violate safety while exploring new actions.
In post-shielding, the shield is deployed during deployment, i.e., after the learning process has ended, so that the potentially dangerous actions of the learned agents could be corrected at runtime.
Besides, shields can be designed for either qualitative safety specifications or quantitative  specifications.
Our dynamic shields consider qualitative safety specifications, which makes them usable either as a pre-shield or as a post-shield~\cite{pranger2021tempest}.

Early works proposed only statically designed shields, while recent literature has seen a surge of dynamic shielding frameworks.
This is to keep up with the increasing uncertainties as shields have shown applicability in wide-ranging real-world use cases.
Some examples of dynamic shields follow.
When the underlying model parameters like environment probability distributions are apriori unknown or partially known, shields will need to dynamically adapt to account for newly discovered model parameters~\cite{pranger2021adaptive,waga2022dynamic,feng2025adaptive}.
When an exhaustive computation of the shield for all possible state-action pairs is computationally infeasible, shields could be dynamically computed at runtime by only analyzing the relatively small set of forward reachable states upto a given horizon~\cite{konighofer2023online}.
When shields' objectives are quantitative, e.g., requiring to keep some cost metric below a given threshold, they may need to adapt to changing requirements like changing cost thresholds under different conditions~\cite{jansen2018shielded}.
Surprisingly, none of the existing works considered our setting of changing qualitative safety specifications.
This was an important gap, since runtime controller adaptation due to changing safety goals is an important topic in AI and control systems research~\cite{majumdar2017funnel,fridovich2018planning,quan2021eva,nayak2023context}.

Our synthesis algorithms are powered by abstraction-based control (ABC), which is a collection of model-based synthesis algorithms for formally verified controllers of nonlinear and hybrid dynamical systems~\cite{tabuada2008approximate,reissig2016feedback,nilsson2017augmented}.
Our work uses a particular ABC algorithm based on feedback refinement relations~\cite{reissig2016feedback}, whose strength is a fast refinement process that was crucial for the fast runtime deployment of our shields.
Incidentally, to the best of our knowledge, our work is the first to use ABC for shield synthesis.

\section{Preliminaries}

\noindent\textbf{Notation.}
Given an alphabet $X$, we will write $X^*$ and $X^\omega$ to respectively denote the set of finite and infinite sequences over $X$, and will write $X^\infty$ to denote $ X^*\cup X^\omega$.
Given a set $S\subseteq X^\infty$, we will write $\pref(S)$ to denote the set of every finite prefix of $S$, i.e., $\pref(S)\coloneqq \set{w\in X^*\mid \exists w'\in X^\infty\;.\; ww'\in S}$.

\medskip
\noindent\textbf{Control systems.} 
We consider \emph{continuous-state, discrete-time control systems}, described as tuples of the form $\tup{\Xs,\Us,\Ws,\f}$, where 
$\Xs\subset \mathbb{R}^n$, $\Us\subset\mathbb{R}^m$, and $\Ws\subset\mathbb{R}^p$ are all compact sets respectively called the \emph{state space}, the \emph{control input space}, and the \emph{disturbance input space}, and the function $\f\colon \Xs\times\Us\times \Ws\to \Xs$ is called the \emph{transition function}.
The constants $n$, $m$, and $p$ are all positive integers and are called the \emph{dimensions} of the respective spaces.

%\todo{Comment on probabilistic disturbances: since we are interested in worst-case guarantees for safety, we use set of disturbances.}

The semantics of the control system $\Sys = \tup{\Xs,\Us,\Ws,\f}$ is described using its transitions and trajectories.
For any given state $\x\in\Xs$, control input $\u\in\Us$, and disturbance input $\w\in \Ws$ of $\Sys$ at a given time step, the new state at the next time step is given by $\x' = f(\x,\u,\w)$, and we will express this as the \emph{transition} $\x\tran{\u,\w}\x'$
The \emph{trajectory} $\traj$ of $\Sys$ starting at a given \emph{initial} state $\x_0\in\Xs$ and caused by control and disturbance input sequences $\u_0,\u_1,\ldots$ and $\w_0,\w_1,\ldots$ is a sequence of transitions $\x_0\tran{\u_0,\w_0}\x_1\tran{\u_1,\w_1}\x_2\ldots$.
The sequence of states $\x_0,\x_1,\ldots \in \Xs^\infty$ appearing in the trajectory $\traj$ will be called the \emph{path} of $\traj$.
Trajectories and paths can be either finitely or infinitely long.

\medskip
\noindent\textbf{Controllers.}
Let $\Sys = \tup{\Xs,\Us,\Ws,\f}$ be a control system.
A \emph{controller} of $\Sigma$ is a \emph{partial} function of the form $\Xs^*\to2^\Us$, which determines the set of allowed control inputs given the history of past states at each point in time.
Every controller $\Cont$ of $\Sys$  produces a set of paths from a given initial state $\x_0\in\Xs$, defined as 
\begin{multline*}
\paths{\Sys}{\Cont}{\x_0}\coloneqq \Big\{ \x_0\x_1\ldots\in\Xs^\infty\Big\vert \exists \w_0\w_1\ldots\in\Ws^\infty\;.\; \\ 
 \x_0\tran{u_0,\w_0}{\x_1}\tran{u_1,\w_1}{\x_2}\ldots \text{ is a trajectory of } \Sys \text{ where } u_i\in \Cont(x_0\ldots x_i) \text{ for all } i\geq 0 \Big\}.
\end{multline*}

We will encounter the following three orthogonal subclasses of controllers, where each subclass can be combined with other subclasses.
\begin{itemize}
	\item A \emph{state-feedback (memoryless) controller} is a controller $\Cont$ that only considers the current state while selecting control inputs, not the entire history.
	Formally, $\Cont(y)=\Cont(y')$ for every pair $y,y'\in \Xs^*$ for which the last states are the same.
	We will represent state-feedback controllers as functions of the form $\Cont\colon\Xs\to 2^\Us$, whose domain is defined as the set of every $\x\in \Xs$ for which $\Cont(\x)$ is defined, and written as $\dom{\Cont}$.
	\item A \emph{deterministic controller} is a controller $\Cont$ that selects a single control input at each step, i.e., has the form $\Xs^*\to \Us$.
%	Every (nondeterministic) controller $\Cont$ generates a set of deterministic controllers, given by $\Contset{\Cont} \coloneqq \set{\Cont'\colon \Xs^*\to \Us\mid \forall y\in \Xs^*\;.\;\Cont'(y)\in \Cont(y)}$.
	\item A \emph{nonblocking controller} is a controller $\Cont$ if every finite path generated by $\Cont$ has an infinite extension, i.e., $\paths{\Sys}{\Cont}{\x_0}\cap \Xs^*\subseteq \pref\left(\paths{\Sys}{\Cont}{\x_0}\cap\Xs^\omega\right)$.
	In other words, every nonblocking controller $\Cont$ must disallow every control input $u$ for every finite path $x_0\ldots x_k$ if there exists a disturbance $w\in\Ws$ with $x_{k+1}=\f(x_k,u,w)$ such that $\Cont(x_0\ldots x_k x_{k+1})$ is undefined.
\end{itemize}

\medskip
\noindent\textbf{Safety specifications.}
Let $\Sys = \tup{\Xs,\Us,\Ws,\f}$ be a control system, and let $G\subseteq \Xs$ be a set of states, designated as the set of \emph{safe} states.\footnote{The symbol ``$G$'' can be associated with the word ``Globally,'' which is the word used to describe safety properties in the linear temporal logic.}
The complement of the safe states will be called the \emph{unsafe} states.
The \emph{safety specification} (with respect to $\Sys$ and $G$) is the set of every sequence of states of $\Sys$ that never leaves $G$, formally written as $\safety_\Sys(G)\coloneqq \set{\path=\x_0\x_1\ldots\in\Xs^\infty \mid \forall i\geq 0\;.\; \x_i\in G}$.
When the system is clear from the context, we will drop the suffix and write $\safety(G)$.
%Under some restrictions, our work can be generalized to more intricate class of safety properties that involve memories; see Remark~\ref{rem:memoryfull safety} for a sketch.

\medskip
\noindent\textbf{Safety controllers.}
Let $\Sys = \tup{\Xs,\Us,\Ws,\f}$ be a control system and $\safety(G)$ be a safety specification.
A state-feedback controller $\Cont$ of $\Sys$ is called a \emph{safety controller} for $\safety(G)$, if, intuitively, $\Cont$ guarantees that all paths of the system stay forever inside $G$ no matter what disturbance inputs are experienced; formally, $\Cont$ must fulfill $\paths{\Sys}{\Cont}{x_0}\subseteq \safety(G)$ for every $x_0\in \dom{\Cont}$.
It is known that for fulfilling safety specifications of the form $\safety(G)$, state-feedback controllers suffice~\cite{vidal2000controlled}.
It is also easy to see that $\dom{\Cont}\subseteq G$.
From now on, we will denote a safety controller for $\safety(G)$ using $\Cont_G$, where the subscript ``$G$'' makes it explicit that $\Cont_G$ is attached to the particular specification $\safety(G)$.

We add one final subclass of controllers to the list of other subclasses presented earlier.
For this, we say a controller $\Cont'$ is a \emph{sub-controller} of $\Cont$, written $\Cont'\sqsubseteq\Cont$, if (a)~$\dom{\Cont'}\subseteq \dom{\Cont}$ and
(b)~for every state $\x\in\dom{\Cont'}$, $\Cont'(x)\subseteq\Cont(\x)$.
Equivalently, we say $\Cont$ is the \emph{super-controller} of $\Cont'$.
\begin{itemize}
	\item For a given safety specification $\safety(G)$, a \emph{maximally permissive safety controller} is a safety controller $\Contmax_G$ such that every other safety controller $\Cont_G$ for $\safety(G)$ is a sub-controller of $\Contmax_G$, i.e., $\Cont_G\sqsubseteq \Contmax_G$.
\end{itemize}

It is known that if safety specifications admit controllers, then these controllers are \emph{unique} nonblocking, maximally permissive (and state-feedback) controllers~\cite{vidal2000controlled}, written \nmp controllers in short.
Specifications other than safety (like liveness) lack this feature, even though workarounds exist that require significantly more sophisticated type of controllers~\cite{anand2023synthesizing}.

\section{Dynamic Shielding for Parametric Safety Specifications}

\subsection{Preliminaries: The Existing (Safety) Shielding Framework}

Shielding is an emerging technology for safety assurance of autonomous systems.
Most autonomous systems need to accomplish their assigned functional tasks, like navigation, while fulfilling a given set of safety constraints, like collision avoidance.
Shielding helps us to create a separation between fulfilling functional tasks and fulfilling safety constraints.
In particular, the functional tasks can be delegated to a \emph{learned}\footnote{The term ``learned controller'' is used as a convenient name. In reality, any unverified controller can be used.} state-feedback controller treated as a black-box, while the safety constraints are enforced by the shield, which monitors the learned controller's decisions and overrides them if safety would be at risk otherwise.
%We formalize shields below.

\begin{definition}[Shields]
	Suppose $\Sys = \tup{\Xs,\Us,\Ws,\f}$ is a control system and $\safety(G)$ is a given safety specification.
	A \emph{shield} is a partial function $\Shield_G\colon \Xs\times \Us\to \Us'$ with $\Us'\subseteq \Us$, such that for every $\x\in\Xs$, $\Shield_G(\x,\u)$ is defined either for every $\u\in \Us$ or for none of $\u\in \Us$.
	The domain of the shield $\Shield_G$ is defined as: $\dom{\Shield_G} \coloneqq \set{\x\in\Xs\mid \Shield_G(\x,\u) \text{ is defined for all } \u\in\Us}$.
\end{definition}

%Our shields are by definition history-independent, i.e., only depends on the current state.
%This is sufficient for our definition of safety specification; see Remark~\ref{rem:memoryfull safety} for further explanation.

%The semantics of the shield $\Shield_G$ is described as follows.
Suppose the shield $\Shield_G$ is deployed with the learned controller $\Contfc\colon\Xs^*\to\Us$.
Let $\x$ be the current state at a given time point.
First, $\Contfc$ proposes the control input $\u=\Contfc(\ldots\x)$, and then, the shield $\Shield_G$ takes into account the pair $(\x,\u)$, and selects the control input $\u' = \Shield_G(\x,\u)$ that is possibly different from $\u$.
%This process gives rise to transitions of the form $x\tran{\Shield_G(x,\Contfc(x)),w} x'$.
The set of resulting paths starting at a given initial state $\x_0\in\Xs$ is given as:
\begin{multline*}
\paths{\Sys}{\Contfc,\Shield_G}{\x_0}\coloneqq \Big\{ \x_0\x_1\ldots\in\Xs^\infty\,\Big\vert\, \exists \w_0\w_1\ldots\in\Ws^\infty\;.\;\\ 
 \x_0\tran{\Shield_G(\x_0,\Contfc(\x_0)),\w_0}{\x_1}\tran{\Shield_G(\x_1,\Contfc(\x_0\x_1)),\w_1}{\x_2}\ldots \text{ is a trajectory of } \Sys \Big\}.
\end{multline*}
The shield $\Shield_G$ guarantees safety under the learned controller $\Contfc$ from the initial state $\x_0$ if $\paths{\Sys}{\Contfc,\Shield_G}{\x_0}\subseteq \safety(G)$.

Whenever the output $u'$ of the shield $\Shield_G$ is different from the output of the controller $\Contfc$, we say that $\Shield_G$ has \emph{intervened}, and we want minimal interventions while fulfilling safety.
Formally, a shield is said to be \emph{minimally intervening} if every intervention is a necessary intervention, i.e., without the intervention, disturbances could push the trajectory outside of the shield's domain, and therefore safety guarantees would be lost.
Our definition of minimal intervention is adapted from the definition by Bloem et al.~\cite{bloem2015shield}, which formalizes minimal intervention with respect to a generic intervention-penalizing cost metric.

\begin{problem}[Minimally intervening shield synthesis]\label{prob:shield synthesis}\\
	\textit{Inputs:} A control system $\Sys$ and a safety specification $\safety(G)$.\\
	\textit{Output:} A shield $\Shield_G^*$ such that for every learned controller $\Contfc$ and for every $\x_0\in \dom{\Shield_G^*}$:
	\begin{description}
		\item[Safety:] $\paths{\Sys}{\Contfc,\Shield_G^*}{\x_0}\subseteq \safety(G)$;
		\item[Minimal interventions:] for every finite path $\x_0\ldots \x_k\in \paths{\Sys}{\Contfc,\Shield_G^*}{\x_0}$, if an intervention happens, i.e., if $\Shield_G^*(\x_k,\Contfc(\x_0\ldots\x_k)) \neq \Contfc(\x_0\ldots\x_k)$, then for every $\u\in \Us$ there exists a $\w\in \Ws$ such that $f(\x,\u,\w)\notin\dom{\Shield_G^*}$.
	\end{description}
\end{problem}

The output of Problem~\ref{prob:shield synthesis} will be called the minimally intervening shield for $\Sys$ and $\safety(G)$, and can be obtained from \nmp safety controllers.

\begin{theorem}\label{thm:shielding via safety controller synthesis}
	Let $\Sys = \tup{\Xs,\Us,\Ws,\f}$ be a control system, $\safety(G)$ be a safety specification, and $\Contmax_G$ be the (unique) nonblocking, maximally permissive (\nmp) controller of $\Sys$ for $\safety(G)$. %, and $\Contfc$ be an arbitrary functional controller for $\Sys$.
	Then, every minimally intervening shield $\Shield_G^*$ for $\Sys$ and $\safety(G)$ fulfills:
	\begin{align}\label{eqn:optimal shield from maximal controller}
		\Shield_G^*(x,u) =
			\begin{cases}
				u	&	\text{if } u\in \Contmax_G(x)\\
				u'\in \Contmax_G(x)	&	\text{otherwise}.
			\end{cases}
	\end{align}
	for every $(x,u)\in \Xs\times\Us$.
\end{theorem}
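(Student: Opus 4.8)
The plan is to use the classical one-step characterisation of the \nmp controller as the bridge between the game-theoretic object $\Contmax_G$ and the operational definition of a shield. Concretely, $\dom{\Contmax_G}$ is the largest subset of $G$ from which safety can be enforced, and it satisfies the fixpoint identity
\begin{align*}
\Contmax_G(x)=\set{u\in\Us\mid \forall w\in\Ws\,.\, f(x,u,w)\in\dom{\Contmax_G}},
\end{align*}
for every $x\in\dom{\Contmax_G}$, with $\Contmax_G(x)\neq\emptyset$ precisely on $\dom{\Contmax_G}$ (this is where nonblockingness and maximal permissiveness enter). Granting this, the theorem reduces to two facts: (i)~$\dom{\Shield_G^*}=\dom{\Contmax_G}$, and (ii)~on this domain the shield passes through safe inputs and replaces unsafe ones by safe ones, which is exactly \eqref{eqn:optimal shield from maximal controller}.

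For the inclusion $\dom{\Shield_G^*}\subseteq\dom{\Contmax_G}$ I would freeze the learned controller to a constant map $\Contfc\equiv u_0$ and read off the state-feedback map $\sigma(x)\coloneqq\Shield_G^*(x,u_0)$ defined on $\dom{\Shield_G^*}$. The Safety clause of Problem~\ref{prob:shield synthesis} gives $\paths{\Sys}{\sigma}{x_0}\subseteq\safety(G)$ for every $x_0\in\dom{\Shield_G^*}$, so $\sigma$ is a safety controller; maximal permissiveness then yields $\sigma\sqsubseteq\Contmax_G$, hence $\dom{\Shield_G^*}\subseteq\dom{\Contmax_G}$. I would also record here that the shielded trajectories never escape $\dom{\Shield_G^*}$, since otherwise the shield would be undefined at a reachable state and the infinite-horizon safety guarantee would fail; this makes $\dom{\Shield_G^*}$ controlled invariant.

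The two cases then follow by reading the shield's requirements against the safe region. If $u\in\Contmax_G(x)$, the fixpoint identity gives $f(x,u,w)\in\dom{\Contmax_G}$ for all $w$, so applying $u$ never leaves the enforceable region; the Minimal-interventions clause therefore forbids the shield from altering $u$, giving $\Shield_G^*(x,u)=u$. This same computation shows that $\dom{\Shield_G^*}$ is closed under $\Contmax_G$-successors, which is the decisive step towards the reverse inclusion $\dom{\Contmax_G}\subseteq\dom{\Shield_G^*}$. Conversely, if $u\notin\Contmax_G(x)$, Safety together with controlled invariance forces the applied input $u'=\Shield_G^*(x,u)$ to keep every successor inside $\dom{\Shield_G^*}=\dom{\Contmax_G}$, which by the fixpoint identity is exactly $u'\in\Contmax_G(x)$.

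The main obstacle is twofold. First, the Minimal-interventions clause as literally quantified (``for every $u\in\Us$'') would make \emph{every} intervention impossible, since the shield's own safe output never leaves its domain; I would read it instead, following its informal statement, as a condition on the \emph{proposed} input $\Contfc(x_0\ldots x_k)$ measured against the maximal enforceable region $\dom{\Contmax_G}$, which is what the pass-through case actually needs. Second, and more seriously, the reverse domain inclusion does not follow from Problem~\ref{prob:shield synthesis} alone: a shield defined on a smaller controlled invariant subset of $\dom{\Contmax_G}$ still satisfies Safety and a self-referential reading of Minimal interventions, yet violates \eqref{eqn:optimal shield from maximal controller}. I would therefore make explicit the convention---implicit in the intended meaning of ``the'' minimally intervening shield---that $\Shield_G^*$ is defined on the entire enforceable region $\dom{\Contmax_G}$; with this convention in place the domain equality, and hence \eqref{eqn:optimal shield from maximal controller} as written, holds.
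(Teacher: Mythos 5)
Your proof is correct and its skeleton matches the paper's: the Safety clause (via maximal permissiveness of $\Contmax_G$) forces the shield's output into $\Contmax_G(x)$, and the Minimal-interventions clause forces pass-through of any $u\in\Contmax_G(x)$. The paper's own proof is a terse, two-paragraph version of exactly this --- any $u\notin\Contmax_G(x)$ ``may potentially violate safety,'' since otherwise one could extend $\Contmax_G$ to a safe super-controller allowing $u$ at $x$, contradicting maximality; and altering a $u\in\Contmax_G(x)$ is by definition an unnecessary intervention --- without the fixpoint identity, the domain equality $\dom{\Shield_G^*}=\dom{\Contmax_G}$, or the constant-controller reduction you introduce. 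Those additions are sound and make the argument more self-contained. More importantly, the two obstacles you flag are genuine features of the formalization that the paper's proof silently elides: the quantifier ``for every $u\in\Us$'' in the Minimal-interventions clause, read literally, can never be satisfied at a state where the shield's own output keeps all successors in its domain, so interventions would be outright forbidden; and nothing in Problem~\ref{prob:shield synthesis} forces $\dom{\Shield_G^*}$ to be the \emph{maximal} controlled-invariant subset of $G$, so the claim ``for every $(x,u)\in\Xs\times\Us$'' tacitly assumes the domain convention you make explicit. The paper implicitly reads minimal intervention against the \emph{proposed} input and against $\dom{\Contmax_G}$; your reconstruction is faithful to that intended argument and more precise about what must be assumed for the theorem to hold as stated.
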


\begin{proof}
	By virtue of maximal permissiveness of $\Contmax_G$, we can infer that every $u\notin \Contmax_G(x)$ may potentially violate safety, since otherwise we could construct a safe super-controller of $\Contmax_G$ that would allow $u$ from $x$, and would otherwise mimic $\Contmax_G$.
	This is not possible since it would contradict the maximal permissiveness assumption of $\Contmax_G$.
	Since $\Shield_G^*$ needs to guarantee safety with its choice of control inputs, therefore it must select control inputs allowed by $\Contmax_G$.
	
	Now for the given $x,u$, if $u\in\Contmax_G(x)$ but $\Shield_G^*(x,u)\neq u$, then the shield violates the minimal intervention requirement, since we know that selecting $u$ instead would not lead to a violation of safety.\qed
\end{proof}

Minimally intervening shields are not unique, since any $u'\in \Contmax_G(x)$ can be selected when $u\notin\Contmax_G(x)$ in Eqn.~\eqref{eqn:optimal shield from maximal controller}.
We will use the heuristics of selecting the $u'$ that minimizes the Euclidean distance from the original input $u$.
However, this does not provide any long-run optimality guarantees, and selecting the best intervening input is still an open problem in shield synthesis.

\begin{remark}
We consider the so-called post-shielding framework, where the shield operates alongside an already learned controller
In contrast, in the pre-shielding framework, shields are used already during the training phase of the controller to prevent safety violations.
It is known that safety shields---and by extension our dynamic safety shields---can be used in both pre and post settings~\cite{pranger2021tempest}, though we will only use the post-shielding view for a simplicity.
\end{remark}

%\begin{remark}\label{rem:memoryfull safety}
%	\todo{Remark that we only consider the safety specifications of the form $\Box G$, but it can be extended to more general class of safety specifications. In this case, we will need to fix a template of the specification, e.g., $\square(?\to \bigcirc ?)$, where the ``$?$'' can be parameterized. Then we can take a product of the specification automaton and the system dynamics and reduce the problem to the form $\Box G$ in the hybrid state space. For this, should we generalize the system description to be any hybrid space? Although shields could be additionally granted access to the history of \emph{past} state-input pairs, this would not be necessary for specifications of the form $\safety(G)$.
%} 
%\end{remark}
%

\subsection{Problem Statement}
A major drawback of traditional shielding is that the computed shield depends on the given safety specification, as can be seen from the statement of Problem~\ref{prob:shield synthesis}.
If the safety specification changes, then the shield needs to be redesigned.
This is especially problematic if the precise safety specification is unknown apriori, and the shield needs to adapt as new safety requirements are discovered during runtime.

We propose the dynamic shielding problem, where the actual safety objective to be encountered during deployment is unknown apriori, though it is known that it will belong to a parametric family of safety specifications.

% be the intersection of a set of \emph{atomic} safety specifications, where the set of all possible atomic safety specifications is finite and known apriori.
We formalize parametric safety specifications.
Suppose $\Sys = \tup{\Xs,\Us,\Ws,\f}$ is a control system, and $R$ is a finite set of subsets of $\Xs$, i.e., $R = \set{G_0,\ldots,G_l}\subset 2^\Xs$, where $R$ behaves like a set of parameters and is called the set of \emph{atomic safe sets}.
The \emph{parametric} safety specification $\psafety(R)$ on $R$ is the family of all safety specifications generated by the safe sets that are conjunctions of subsets of $R$, i.e., $\psafety(R)=\set{\safety(G)\mid \exists S\subseteq R\;.\;G=\cap_{S'\in 
S} S'}$.
Clearly, the size of $\psafety(R)$ is $2^{|R|}$.

A \emph{dynamic shield} for $R$ is a function mapping every safety specification $\safety(G)\in \psafety(R)$ to a regular, static shield for $\safety(G)$.
We assume that the set $R$ is provided \emph{statically} during the design of the dynamic shield, while the actual safety specification $\safety(G)\in \psafety(R)$ is chosen \emph{dynamically} during runtime.

\begin{problem}[Dynamic shield synthesis]\label{prob:parametric shield synthesis}\\
	\textit{Inputs:} A control system $\Sys = (\Xs,\Us,\Ws,\f)$ and a finite set $R\subset 2^\Xs$ of atomic safe sets.\\
	\textit{Output:} A dynamic shield $\Shield_R^*$ such that for every safety specification $\safety(G)\in \psafety(R)$, $\Shield_R^*(G)$ is a minimally intervening (static) shield for $\Sys$ and $\safety(G)$.
\end{problem}

%The output of Problem~\ref{prob:parametric shield synthesis} will be referred to as the \emph{minimally intervening} adaptive shield for $(\Sys,\psafety(R))$. \todo{Do we need this?}

With the help of Theorem~\ref{thm:shielding via safety controller synthesis}, Problem~\ref{prob:parametric shield synthesis} boils down to the dynamic safety controller synthesis problem, where dynamic safety controllers are functions that map every safety specification $\safety(G)\in \psafety(R)$ to an \nmp safety controller for $\safety(G)$.

\begin{problem}[Dynamic safety controller synthesis]\label{prob:parametric safety controller synthesis}\\
	\textit{Inputs:} A control system $\Sys = (\Xs,\Us,\Ws,\f)$ and a finite set $R\subset 2^\Xs$ of atomic safe sets.\\
	\textit{Output:} A dynamic safety controller $\Contmax_R$ such that for every safety specification $\safety(G)\in \psafety(R)$, $\Contmax_R(G)$ is a nonblocking, maximally permissive (static) safety controller for $\Sys$ and $\safety(G)$.
\end{problem}

Clearly, every solution to Problem~\ref{prob:parametric safety controller synthesis} can be transformed via Eqn.~\eqref{eqn:optimal shield from maximal controller} to obtain a solution to Problem~\ref{prob:parametric shield synthesis}.
Therefore, in what follows, we shift our focus to solving Problem~\ref{prob:parametric safety controller synthesis}.

\subsection{Efficient Dynamic Safety Controller Synthesis}
\label{sec:efficient adaptive safety control}

In theory, Problem~\ref{prob:parametric safety controller synthesis} can be solved using two different types of brute-force approaches:
The first one is a pure offline algorithm, where we iterate over the set of all safety specifications in $\psafety(R)$, and for each of them, compute an \nmp (static) safety controller.
The second one is a pure online algorithm, where we compute a new \nmp (static) safety controller after observing the current safety specification at each time point during runtime.
While the pure offline algorithm would be prohibitively expensive, owing to the exponential size of  $\psafety(R)$, the pure online algorithm would be expensive enough to be deployed during runtime, especially for systems with fast dynamics.

Our dynamic algorithm strikes a balance between offline design and online adaptation, and proves to be significantly more efficient compared to both brute-force algorithms.
Our new algorithm has an offline design phase and an online deployment phase.
During the offline design phase, we compute the \nmp safety controller $\Contmax_{G}$ for each atomic safety specification $\safety(G)$ for $G\in R$.
These controllers may be called the \emph{atomic} safety controllers.
During the online deployment phase, at each step the true safe set $G = G'\cap G''\cap\ldots$ is revealed, where $G',G'',\ldots\in R$, and the required safety controller for $\safety(G)$ is obtained by dynamically composing the corresponding atomic safety controllers $\Contmax_{G'},\Contmax_{G''},\ldots$.

The process of composing atomic safety controllers at runtime involve two steps, namely a \emph{controller product} operation, followed by \emph{enforcing non-blockingness}.
We describe the two steps one by one in the following; an illustration is provided in Figure~\ref{fig:illustration of composition}.

\begin{definition}[Controller product]\label{def:controller composition}
	Let $\Cont$ and $\Cont'$ be a pair of state-feedback controllers of a given control system $\Sys = \tup{\Xs,\Us,\Ws,\f}$.
	We define the product $\Cont$ and $\Cont'$ as the state-feedback controller $\Cont\bigotimes \Cont'$ such that for every $\x\in \dom{\Cont} \cap \dom{\Cont'}$, $\Cont\bigotimes \Cont'(x)=\Cont(x)\cap \Cont'(x)$, and for every $\x\notin \dom{\Cont} \cap \dom{\Cont'}$, $\Cont\bigotimes \Cont'(x)$ is undefined.
\end{definition}

Intuitively, the product controller $\Cont\bigotimes \Cont'$ outputs only those control inputs that are safe for both $\Cont$ and $\Cont'$, and suppresses those that are unsafe for at least one of them.
This, however, does not guarantee the nonblockingness of $\Cont\bigotimes \Cont'$ itself, as is illustrated in Figure~\ref{fig:illustration:product}, where the product controller $\Contmax_G\bigotimes \Contmax_H$ blocks at the state $e$.
Luckily, we will apply the product on the atomic safety controllers, which are \nmp, and it follows that all nonblocking safety controllers for the overall safety specification will be sub-controllers of the product.

\begin{figure}[t]
	\centering
	\tikzset{every state/.style={minimum size=18pt},->/.style={-Latex}}
	\begin{subfigure}[t]{0.3\textwidth}
		\centering
		\pgfdeclarelayer{background layer}
		\pgfsetlayers{background layer,main}
		\def\d{0.4}
		\begin{tikzpicture}[node distance=0.7cm]
			\node[state,initial below]		(a)	at	(0,0)				{$a$};
			\node[state]		(b)	[left=of a]		{$b$};
			\node[state]		(c)	[above=of b]		{$c$};
			\node[state]		(d)	[below=of b]		{$d$};
			
			\node[state]		(e)	[right=of a]		{$e$};
			\node[state]		(f)	[above=of e]		{$f$};
			\node[state]		(g)	[below=of e]		{$g$};
			
			\path[->]
				(a)		edge	node[above]		{$u_1$}	(b)
				(b)		edge	node[right]		{$u_2$}			(d)
			;
						
			\path[->]
				(a)		edge	node[above]		{$u_2$}		(e)
				(b)		edge	node[right]		{$u_1$}			(c)
				(e)		edge	node[left]		{$u_1$}		(f)
						edge	node[left]		{$u_2$}	(g);
						
			\path[->]
				(c)		edge[loop above]	()
				(f)		edge[loop above]	()
				(d)		edge[loop below]	()
				(g)		edge[loop below]	();

			\begin{pgfonlayer}{background layer}
				\draw[fill=blue,opacity=0.3]	($(d)+(-\d,-\d)$)		--	
												($(g)+(\d,-\d)$) --
												($(e)+(\d,\d)$) -- 
												($(b)+(\d,\d)$) --
												($(c)+(\d,\d)$) --
												($(c)+(-\d,\d)$) --
												cycle;
												
				\draw[fill=red,opacity=0.3]	($(d)+(-\d,-\d)$)		--	
												($(d)+(\d,-\d)$) --
												($(b)+(\d,-\d)$) -- 
												($(e)+(\d,-\d)$) --
												($(f)+(\d,\d)$) --
												($(c)+(-\d,\d)$) --
												cycle;
			\end{pgfonlayer}
			
		\end{tikzpicture}
		\caption{$\dom{\Contmax_G}$ is in red and $\dom{\Contmax_H}$ is in blue.}
		\label{fig:illustration:atomic controllers}
	\end{subfigure}
	\begin{subfigure}[t]{0.3\textwidth}
		\centering
		\pgfdeclarelayer{background layer}
		\pgfsetlayers{background layer,main}
		\def\d{0.4}
		\begin{tikzpicture}[node distance=0.7cm]
			\node[state,initial below]		(a)	at	(0,0)				{$a$};
			\node[state]		(b)	[left=of a]		{$b$};
			\node[state]		(c)	[above=of b]		{$c$};
			\node[state]		(d)	[below=of b]		{$d$};
			
			\node[state]		(e)	[right=of a]		{$e$};
			\node[state]		(f)	[above=of e]		{$f$};
			\node[state]		(g)	[below=of e]		{$g$};
			
			\path[->]
				(a)		edge	node[above]		{$u_1$}	(b)
				(b)		edge	node[right]		{$u_2$}			(d)
			;
						
			\path[->]
				(a)		edge	node[above]		{$u_2$}		(e)
				(b)		edge	node[right]		{$u_1$}			(c)
				(e)		edge	node[left]		{$u_1$}		(f)
						edge	node[left]		{$u_2$}	(g);
						
			\path[->]
				(c)		edge[loop above]	()
				(f)		edge[loop above]	()
				(d)		edge[loop below]	()
				(g)		edge[loop below]	();

			\begin{pgfonlayer}{background layer}
				\draw[fill=blue!50!red,opacity=0.45]	($(d)+(-\d,-\d)$)		--	
												($(d)+(\d,-\d)$)	--
												($(b)+(\d,-\d)$) --
												($(e)+(\d,-\d)$) -- 
												($(e)+(\d,\d)$) --
												($(b)+(\d,\d)$) --
												($(c)+(\d,\d)$) --
												($(c)+(-\d,\d)$) --
												cycle;
			\end{pgfonlayer}
			
		\end{tikzpicture}
		\caption{Product construction: $\dom{\Contmax_G \bigotimes\Contmax_H}$}
		\label{fig:illustration:product}
	\end{subfigure}
	\begin{subfigure}[t]{0.3\textwidth}
		\centering
		\pgfdeclarelayer{background layer}
		\pgfsetlayers{background layer,main}
		\def\d{0.4}
		\begin{tikzpicture}[node distance=0.7cm]
			\node[state,initial below]		(a)	at	(0,0)				{$a$};
			\node[state]		(b)	[left=of a]		{$b$};
			\node[state]		(c)	[above=of b]		{$c$};
			\node[state]		(d)	[below=of b]		{$d$};
			
			\node[state]		(e)	[right=of a]		{$e$};
			\node[state]		(f)	[above=of e]		{$f$};
			\node[state]		(g)	[below=of e]		{$g$};
			
			\path[->]
				(a)		edge	node[above]		{$u_1$}	(b)
				(b)		edge	node[right]		{$u_2$}			(d)
			;
						
			\path[->]
				(a)		edge	node[above]		{$u_2$}		(e)
				(b)		edge	node[right]		{$u_1$}			(c)
				(e)		edge	node[left]		{$u_1$}		(f)
						edge	node[left]		{$u_2$}	(g);
						
			\path[->]
				(c)		edge[loop above]	()
				(f)		edge[loop above]	()
				(d)		edge[loop below]	()
				(g)		edge[loop below]	();

			\begin{pgfonlayer}{background layer}
				\draw[fill=blue!50!red,opacity=0.45]	($(d)+(-\d,-\d)$)		--	
												($(d)+(\d,-\d)$)	--
												($(b)+(\d,-\d)$) --
												($(a)+(\d,-\d)$) -- 
												($(a)+(\d,\d)$) --
												($(b)+(\d,\d)$) --
												($(c)+(\d,\d)$) --
												($(c)+(-\d,\d)$) --
												cycle;
			\end{pgfonlayer}
			
		\end{tikzpicture}
		\caption{Ensuring nonblockingness: The domain of the largest nonblocking sub-controller of $\Contmax_G \bigotimes\Contmax_H$.}
		\label{fig:illustration:nonblockingness}
	\end{subfigure}
	\caption{Illustration of the two steps involved in the online composition of atomic safety controllers.
	The automaton represents a finite-state control system with two control inputs $u_1,u_2$ and no disturbance inputs.
	The nodes are the states and the arrows represent the transition function.
	Suppose there are two safety specifications $\safety(G)$ and $\safety(H)$, where $G=\set{a,b,c,d,e,f}$ and $H=\set{a,b,c,d,e,g}$.
	The colored regions represent the domains of the respective controllers, and each controller's output at a given state is the set of control inputs for which the next state is in the domain.
	}
	\label{fig:illustration of composition}
\end{figure}
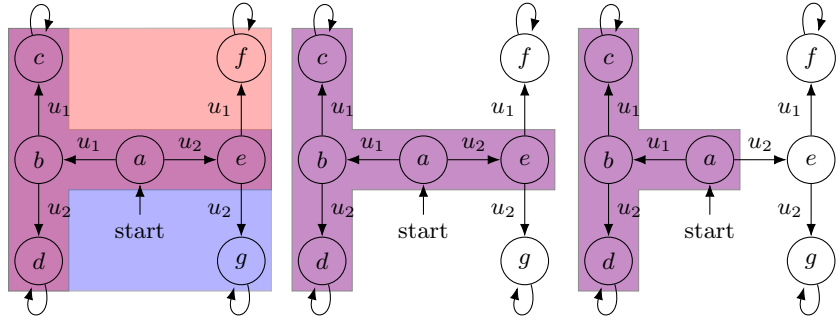

\begin{theorem}~\label{thm:on the composition}
	Let $\Sys = \tup{\Xs,\Us,\Ws,\f}$ be a given control system, $\safety(G)$ and $\safety(H)$ be safety specifications, and $\Cont_G^*$ and $\Cont_H^*$ be the nonblocking, maximally permissive (\nmp) controllers for  $\safety(G)$ and $\safety(H)$, respectively.
	A nonblocking controller is a safety controller for $\safety(G\cap H)$ if and only if it is a nonblocking sub-controller of $\Contmax_{G}\bigotimes \Contmax_H$.
\end{theorem}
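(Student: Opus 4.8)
The plan is to prove the two implications of the biconditional separately, after first isolating two elementary facts that both directions rely on. The first is the set identity $\safety(G\cap H)=\safety(G)\cap\safety(H)$, which falls directly out of the definition of $\safety$: a path keeps every one of its states inside $G\cap H$ precisely when it keeps them inside $G$ and inside $H$ at the same time. The second is monotonicity of path sets under the sub-controller order, i.e.\ that $\Cont\sqsubseteq\Cont'$ entails $\paths{\Sys}{\Cont}{\x_0}\subseteq\paths{\Sys}{\Cont'}{\x_0}$ for every $\x_0$; this holds because each trajectory realizing a $\Cont$-path uses inputs drawn from $\Cont(\cdot)\subseteq\Cont'(\cdot)$ and visits only states of $\dom{\Cont}\subseteq\dom{\Cont'}$, so the very same trajectory is admissible for $\Cont'$. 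I treat every controller as state-feedback throughout, which is forced since both ``safety controller'' and ``sub-controller'' are defined only in the state-feedback setting.

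For the ``if'' direction I would start from a nonblocking sub-controller $\Cont$ of $\Contmax_G\bigotimes\Contmax_H$. By Definition~\ref{def:controller composition} the product has domain $\dom{\Contmax_G}\cap\dom{\Contmax_H}$ and pointwise-intersection outputs, so $\Cont\sqsubseteq\Contmax_G$ and $\Cont\sqsubseteq\Contmax_H$ both follow. Monotonicity applied twice, together with the fact that $\Contmax_G$ and $\Contmax_H$ are themselves safety controllers, gives $\paths{\Sys}{\Cont}{\x_0}\subseteq\safety(G)$ and $\paths{\Sys}{\Cont}{\x_0}\subseteq\safety(H)$ for every $\x_0\in\dom{\Cont}$, hence $\paths{\Sys}{\Cont}{\x_0}\subseteq\safety(G)\cap\safety(H)=\safety(G\cap H)$ by the set identity. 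As $\Cont$ is nonblocking by assumption, it is a nonblocking safety controller for $\safety(G\cap H)$.

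For the ``only if'' direction I would start from a nonblocking safety controller $\Cont$ for $\safety(G\cap H)$. Then $\paths{\Sys}{\Cont}{\x_0}\subseteq\safety(G\cap H)\subseteq\safety(G)$, so $\Cont$ is in particular a safety controller for $\safety(G)$, and maximal permissiveness of $\Contmax_G$ forces $\Cont\sqsubseteq\Contmax_G$; symmetrically $\Cont\sqsubseteq\Contmax_H$. Matching the resulting domain and pointwise-output containments against the product definition yields $\Cont\sqsubseteq\Contmax_G\bigotimes\Contmax_H$, and since $\Cont$ is nonblocking it is a nonblocking sub-controller of the product.

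I do not expect a genuine technical obstacle; the argument is bookkeeping over the definitions. The two points needing care are verifying monotonicity at the level of individual trajectories rather than merely at the level of output sets, ensuring that the intermediate states of a $\Cont$-path indeed lie in $\dom{\Contmax_G}$ so the trajectory is admissible for $\Contmax_G$; and recognising that the product $\Contmax_G\bigotimes\Contmax_H$ need not itself be nonblocking, as Figure~\ref{fig:illustration:product} illustrates, which is exactly why the nonblockingness hypothesis is imposed explicitly on both sides rather than being deduced. The single substantive step is the appeal to maximal permissiveness in the ``only if'' direction, as it is what upgrades the local, state-by-state comparison of $\Cont$ against $\Contmax_G$ and $\Contmax_H$ into the global sub-controller relation.
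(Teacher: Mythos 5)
Your proposal is correct and follows essentially the same route as the paper's own proof: the identity $\safety(G\cap H)=\safety(G)\cap\safety(H)$, the observation that a sub-controller of the product only emits inputs permitted by both $\Contmax_G$ and $\Contmax_H$, and the appeal to maximal permissiveness to obtain $\Cont\sqsubseteq\Contmax_G$ and $\Cont\sqsubseteq\Contmax_H$ in the converse direction. You merely make explicit the monotonicity of path sets under $\sqsubseteq$, which the paper leaves implicit.
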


\begin{proof}
	First, observe that $\safety(G\cap H) = \safety(G)\cap \safety(H)$.
	This follows from the definition of safety specifications.
	
	\noindent\textbf{[If:]} Suppose $\Cont$ is a nonblocking sub-controller of $\Contmax_{G}\bigotimes \Contmax_H$.
	Since this is a sub-controller of $\Contmax_{G}\bigotimes \Contmax_H$, which outputs control inputs that are allowed by both $\Cont_G^*$ and $\Cont_H^*$, it follows that $\Cont$ satisfies both $\safety(G)$ and $\safety(H)$, and therefore it satisfies $\safety(G\cap H)$.
	(Moreover, $\Cont$ is already assumed to be nonblocking.)
	
	\noindent\textbf{[Only if:]} Suppose $\Cont$ is a nonblocking safety controller for $\safety(G\cap H)$.
	Therefore $\Cont$ is a nonblocking safety controller for both $\safety(G)$ and $\safety(H)$, separately.
	I.e., the control inputs selected by $\Cont$ fulfill both $\safety(G)$ and $\safety(H)$ simultaneously, and therefore they are also allowed by $\Contmax_{G}\bigotimes \Contmax_H$.
	Therefore, $\Cont$ is a (nonblocking, by assumption) sub-controller of $\Contmax_{G}\bigotimes \Contmax_H$.\qed
\end{proof}

Theorem~\ref{thm:on the composition} dramatically narrows down the search space for the sought \nmp safety controller for $\safety(G\cap H)$. 
In particular, as the sought controller has to be nonblocking, it is now guaranteed to be a sub-controller of $\Contmax_G\bigotimes \Contmax_H$.
The maximal permissiveness on the other hand will be guaranteed by selecting the specific nonblocking sub-controller $\Contmax_G\bigotimes \Contmax_H$ that is the super-controller of all other nonblocking sub-controllers of $\Contmax_G\bigotimes \Contmax_H$.
We summarize this below.

\begin{corollary}\label{cor:nonblocking maximally-permissive controller synthesis}
	The nonblocking, maximally permissive (\nmp) safety controller $\Contmax_{G\cap H}$ for $\safety(G\cap H)$ fulfills: 
\begin{enumerate}[(a)]	
	\item $\Contmax_{G\cap H}$ is nonblocking, 
	\item $\Contmax_{G\cap H} \sqsubseteq \Contmax_{G}\bigotimes \Contmax_H$, and 
	\item for every nonblocking $\Cont$ with $\Cont\sqsubseteq \Contmax_{G}\bigotimes \Contmax_H$, $\Cont\sqsubseteq \Contmax_{G\cap H}$.
\end{enumerate}
\end{corollary}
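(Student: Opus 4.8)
The plan is to obtain all three items as near-immediate consequences of Theorem~\ref{thm:on the composition}, combined with the defining properties of the \nmp controller $\Contmax_{G\cap H}$ and the cited result guaranteeing that safety specifications admit a unique \nmp controller. First I would recall that, by definition, $\Contmax_{G\cap H}$ is the unique controller for $\safety(G\cap H)$ that is simultaneously nonblocking and maximally permissive; in particular it is a genuine safety controller for $\safety(G\cap H)$, so it is eligible as the input ``$\Cont$'' to both directions of Theorem~\ref{thm:on the composition}. This observation does all the heavy lifting, since the theorem already characterizes the nonblocking safety controllers for $\safety(G\cap H)$ as exactly the nonblocking sub-controllers of $\Contmax_{G}\bigotimes \Contmax_H$.

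Item~(a) needs no argument: nonblockingness is built into the definition of an \nmp controller. For item~(b), I would note that $\Contmax_{G\cap H}$ is a nonblocking safety controller for $\safety(G\cap H)$, and then apply the ``only if'' direction of Theorem~\ref{thm:on the composition} with $\Cont = \Contmax_{G\cap H}$; this yields that $\Contmax_{G\cap H}$ is a nonblocking sub-controller of $\Contmax_{G}\bigotimes \Contmax_H$, and dropping the (already known) nonblockingness conclusion leaves precisely $\Contmax_{G\cap H}\sqsubseteq \Contmax_{G}\bigotimes \Contmax_H$.

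For item~(c), I would take an arbitrary nonblocking $\Cont$ with $\Cont\sqsubseteq \Contmax_{G}\bigotimes \Contmax_H$ and invoke the ``if'' direction of Theorem~\ref{thm:on the composition} to conclude that $\Cont$ is a safety controller for $\safety(G\cap H)$. Since $\Contmax_{G\cap H}$ is maximally permissive, every safety controller for $\safety(G\cap H)$---in particular $\Cont$---is a sub-controller of $\Contmax_{G\cap H}$, giving $\Cont\sqsubseteq \Contmax_{G\cap H}$ as required.

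The only delicate point, and the step I would treat most carefully, concerns the \emph{scope} of maximal permissiveness. The definition in the preliminaries grants that every safety controller for $\safety(G\cap H)$---blocking or not---is a sub-controller of $\Contmax_{G\cap H}$, whereas item~(c) only needs this for the nonblocking ones; so the inclusion is unproblematic once the ``if'' direction has certified that each such $\Cont$ is indeed a safety controller, and I must be careful not to silently weaken ``maximally permissive among all safety controllers'' to ``among nonblocking controllers.'' I would also flag that the statement presupposes that $\Contmax_{G\cap H}$ exists, which rests on the cited uniqueness/existence result (the degenerate empty-winning-region case being handled trivially by the empty-domain controller). Finally, I would remark that (a)--(c) together pin down $\Contmax_{G\cap H}$ as the greatest nonblocking sub-controller of $\Contmax_{G}\bigotimes \Contmax_H$, which is exactly the object the online composition step is required to compute.
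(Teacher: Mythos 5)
Your proposal is correct and follows essentially the same route as the paper, which states the corollary without a formal proof but sketches exactly this reasoning in the preceding paragraph: nonblockingness and maximal permissiveness come from the definition of the \nmp controller, while items (b) and (c) are the two directions of Theorem~\ref{thm:on the composition}. Your extra care about the scope of maximal permissiveness and the existence of $\Contmax_{G\cap H}$ is sound and does not change the argument.
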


The nonblocking  sub-controller of $\Contmax_{G}\bigotimes \Contmax_H$ fulfilling (b) and (c) in Corollary~\ref{cor:nonblocking maximally-permissive controller synthesis} will be referred to as the \emph{largest nonblocking sub-controller} of $\Contmax_{G}\bigotimes \Contmax_H$.

The computation of \nmp safety controllers and largest nonblocking sub-controllers may or may not be decidable, depending on the nature of the state space (finite or infinite) and the nature of the transition function (linear or nonlinear) of the system~\cite{vidal2000controlled}.
We will present a sound but incomplete abstraction-based synthesis algorithm in the next section.
For the moment, assuming largest nonblocking sub-controllers can be algorithmically computed, we summarize the overall dynamic safety controller synthesis algorithm in the following.

\newpage
\medskip
\hrule
\smallskip
\noindent\textbf{Inputs:} Control system $\Sigma$, set $R$ of atomic safe sets \\
\noindent\textbf{Output:} Dynamic safety controller $\Contmax_R$ for the parameterized safety specification $\psafety(R)$\\
\noindent\textbf{The algorithm:}
\begin{enumerate}[(A)]
	\item \textbf{The offline design phase:} Compute the set of atomic safety controllers $\set{\Contmax_{G_i}}_{{G_i}\in R}$, where $\Contmax_{G_i}$ is the \nmp safety controller for $G_i$.
	\item \textbf{The online deployment phase:} Let $x$ be the current state and $G = G_1\cap \ldots \cap G_k$ be the current obstacle, where $G_i\in R$ for all $i\in [1;k]$.
	We need to output $\Contmax_R(G)(x)$, which is obtained by composing the atomic safety controllers $\Contmax_{G_1},\ldots,\Contmax_{G_k}$ using the following two-step process:
	\begin{enumerate}[1.]
		\item  Compute the product $\Cont \coloneqq \Contmax_{G_1}\bigotimes \ldots\bigotimes \Contmax_{G_k}$.
	 	\item Compute the largest nonblocking sub-controller $\Cont'$ of $\Cont$, and output $\Contmax_R(G)(x)\coloneqq\Cont'(x)$.
	 \end{enumerate}
\end{enumerate}
\hrule
\medskip

\section{Synthesis Algorithm using Abstraction-Based Control}
\label{sec:ABC}

%\todo{Name changes: Combination (of atomic controllers) $\rightarrow$ composition, composition $\rightarrow$ product}

In Section~\ref{sec:efficient adaptive safety control}, we presented the theoretical steps for synthesizing dynamic safety controllers, which involves computing atomic safety controllers (Step A), the product operation (Step B1), and computing the largest nonblocking sub-controller of a given controller (Step B2).
We now present a sound but incomplete algorithm for implementing these steps using grid-based finite abstraction of the given control system.
Most of the results in this section are closely related to the existing works from the literature, but are adapted to our setting.

\subsection{Preliminaries: Abstraction-Based Control (ABC)}

Abstraction-based control (ABC) is a collection of controller synthesis algorithms, which use systematic grid-based abstractions of continuous control systems and performs synthesis over these abstractions using automata-based approaches.
The strengths of ABC algorithms are in their expressive power, namely they support almost all widely used control system models alongside rich temporal logic specifications~\cite{majumdar2024symbolic,reissig2016feedback}.
Besides, ABC algorithms are usually implementable using efficient symbolic data structures, such as BDDs, helping us to devise efficient push-button controller synthesis algorithms in practice.

The typical workflow of an ABC algorithm has three stages, namely \emph{abstraction}, \emph{synthesis}, and (controller) \emph{refinement}.
We describe each stage one by one in the following, where we assume that we are given a control system $\Sys=\tup{\Xs,\Us,\Ws,\f}$ and a generic specification $\Phi\subseteq \Xs^\infty$ as inputs, and the aim is to compute a controller $\Cont_\Phi\colon\Xs\to 2^\Us$ whose domain is as large as possible such that $\paths{\Sys}{\Cont_\Phi}{\x_0}\subseteq \Phi$ for all $\x_0\in\dom{\Cont_\Phi}$.

\smallskip
\noindent\textbf{Abstraction.}
In the abstraction stage, the given control system $\Sys$ is approximated by a finite grid-based \emph{abstraction}.
There are many alternative approaches to construct the abstraction, and we use the one based on \emph{feedback refinement relations} (FRR)~\cite{reissig2016feedback}. 
In FRR, the abstraction is modeled as a separate control system $\Sysh=\tup{\Xsh,\Ush,\fh}$ without disturbances, where $\Xsh$ and $\Ush$ are finite sets, and $\fh$ is a nondeterministic transition function, i.e., has the form $\Xsh\times\Ush\to 2^{\Xsh}$.
The set $\Xsh$ is obtained as the collection of the finitely many grid cells created by partitioning the continuous state space $\Xs$; therefore, every element of $\Xsh$ is a subset of $\Xs$.
The set $\Ush$ is obtained as the collection of finitely many usually equidistant points selected from $\Us$, i.e., $\Ush$ is a finite subset of $\Us$.

Suppose $Q\colon\x\mapsto\xh$  with $\x\in\xh$ is a mapping that maps every continuous state of $\Sigma$ to the (unique) cell of $\Xsh$ it belongs to.
We will extend $Q$ to map sets of states and sets of state sequences of $\Sys$ to their counterparts for $\Sysh$ in the obvious manner.
We say $Q$ is an FRR from $\Sys$ to $\Sysh$, written $\Sys\preccurlyeq_Q\Sysh$, if for every $\x\in \Xs$, for every $\uh\in \Ush$, and for every $w\in\Ws$, there exists $\xh'\in \fh\left(Q(\x),\uh\right)$ such that $(\f(\x,\uh,w),\xh')\in Q$.
We omit the details of how to construct $\fh$ such that $\Sys\preccurlyeq_Q\Sysh$ holds, and refer the reader to the original paper~\cite{reissig2016feedback}.

When $\Sys\preccurlyeq_Q \Sysh$, it is guaranteed that for every controller $\Cont\colon\Xs\to\Ush$ of the system $\Sys$ and for every initial state $\x_0$, $Q\left(\paths{\Sys}{\Cont}{\x_0}\right)\subseteq \paths{\Sysh}{\Cont}{Q(\x_0)}$. %, where $Q\left(\paths{\Sys}{\Cont}{\x_0}\right) $ is the set of projections of paths in $\paths{\Sys}{\Cont}{\x_0}$ via $Q$. %, i.e.,  $Q\left(\paths{\Sys}{\Cont}{\x_0}\right) =\set{\xh_0\xh_1\ldots\mid \exists x_0x_1\ldots\in \paths{\Sys}{\Cont}{\x_0}\;.\;\forall i\geq 0\;.\;\xh_i=Q(\x_i)}$.
In other words, the paths of the abstraction $\Sysh$ conservatively over-approximates (with respect to the mapping $Q$) the paths of the control system $\Sys$ under the same controller $\Cont$ and for every sequence of disturbance inputs.

\smallskip
\noindent\textbf{Synthesis.}
In the synthesis stage, first, the given specification $\Phi$ is conservatively abstracted to the specification $\Phih$ for $\Sysh$ such that $\Phih\subseteq Q(\Phi)$.
When $\Phi$ is a safety specification $\safety(G)$ for some $G\subseteq \Xs$, the abstract specification can be chosen as $\Phih = \safety_{\Sysh}(Q(G))$, i.e., the paths of $\Sysh$ which avoid $Q(G)$ at all time.

Next, we treat the abstraction $\Sysh$ as a two-player, turn-based adversarial game arena, where the controller player chooses a control input $\uh$ at each state $\xh$, while the environment player resolves the nondeterminism in $\fh(\xh,\uh)$.
The objective of the controller player is to come up with an abstract controller $\Conth_{\Phih}\colon\Xsh\to 2^{\Ush}$ such that no matter what the environment player does, the resulting sequence of states remains inside the set $\Phih$.
In the next subsection, we will describe the algorithm for finding such abstract controllers for safety specifications.

\smallskip
\noindent\textbf{Refinement.}
The (controller) refinement is the stage where the abstract controller $\Conth_{\Phih}$ of $\Sysh$ for $\Phih$ is mapped back to a concrete controller $\Cont_\Phi$ for the system $\Sys$, which amounts to simply defining $\Cont_\Phi(\x) \coloneqq \Conth_{\Phih}(Q(\x))$ for every $\x\in \dom{\Cont} = \cup_{\xh\in \dom{\Conth_{\Phih}}}\xh$.
By virtue of the FRR $Q$ between $\Sys$ and $\Sysh$, it is guaranteed that $\paths{\Sys}{\Cont_\Phi}{\x_0}\subseteq\Phi$ for all $\x_0\in \dom{\Cont_\Phi}$; in other words, $\Cont_\Phi$ is a sound controller of $\Sys$.
It is worthwhile to mention that such a simple refinement stage is one unique strength of FRR, since the other alternatives~\cite{tabuada2008approximate,pola2009symbolic} usually require a significantly more involved refinement mechanism.

\begin{remark}\label{rem:incompleteness of ABC}
Even though ABC produces sound controllers, it lacks completeness.
This means that sometimes it will not be able to find a controller even if there exist one, and sometimes the domain of the computed controller will be strictly smaller than the controller with the largest possible domain that exists in reality.
This is unavoidable if we are uncompromising with soundness, since temporal logic control of nonlinear control systems is undecidable in general~\cite{henzinger2000robust}.
The side-effect of using ABC to solve Problem~\ref{prob:parametric safety controller synthesis} is that the maximal permissiveness guarantee can no longer be achieved, though our safety controllers will be maximally permissive with respect to the abstraction.
One way to improve the permissiveness would be to reduce the discretization granularity in the abstraction, though this will increase the computational complexity due to larger abstraction size.
\end{remark}

\subsection{ABC-Based Dynamic Safety Control}

We now present ABC-based algorithms to solve the steps A, B1, and B2 of the dynamic safety controller synthesis algorithm (Section~\ref{sec:efficient adaptive safety control}).
For this, we fix the abstraction $\Sysh$ of the system $\Sys$, assuming $\Sys\preccurlyeq_Q\Sysh$ for a given FRR $Q$, and present our algorithms on $\Sysh$.
Using the standard refinement process of ABC, we will obtain an dynamic safety controller for $\Sys$.
%The advantage of using ABC is the richness of the class of control systems that it can support, and the possibility of implementing ABC algorithms in an efficient symbolic manner.

\begin{minipage}{0.45\textwidth}
	\begin{algorithm}[H]
	\caption{\SafetySynt}
	\label{alg:safety control}
	\begin{algorithmic}[1]
		\Require $\Sysh = \tup{\Xsh,\Ush,\fh}$, $\safety_{\Sysh}(\Ghi)$
		\Ensure Safety controller $\Conth_{\Ghi}$ of $\Sysh$
		\State $S \gets \Xsh$ \label{line:safety control a}
		\Do 
			\State $S_{\mathsf{old}}\gets S$
			\State $S\gets \cpre(S) \cap \Ghi$ \label{line:safety control b}
		\doWhile{$S\neq S_{\mathsf{old}}$}
		\State $\forall \xh\in S\;.\;\Conth_{\Ghi}(\xh) \gets \set{\uh\in\Ush\mid \fh(\xh,\uh)\subseteq S}$\label{line:safety control c}
		\State \Return $\Conth_{\Ghi}$
	\end{algorithmic}
\end{algorithm}
\end{minipage}
\hfill
\begin{minipage}{0.45\textwidth}
	\begin{algorithm}[H]
	\caption{\NBSynt}
	\label{alg:nonblocking control}
	\begin{algorithmic}[1]
		\Require $\Sysh = \tup{\Xsh,\Ush,\fh}$, $\Conth\colon \Xsh\to 2^{\Ush}$
		\Ensure Largest nonblocking sub-controller $\Conth'$ of $\Conth$
		\State $\Xsh'\gets \Xsh\cup \set{\bot}$
		\State Define $\fh'\colon \Xsh'\times\Ush\to 2^{\Xsh'}$: \label{line:nonblocking control a}
		\Statex $\forall \xh\in \Xsh\;.\;\forall \uh\in \Conth(\xh)\;.\;\fh'(\xh,\uh) \gets \fh(\xh,\uh)$
		\Statex $\forall \xh\in \Xsh\;.\;\forall \uh\notin \Conth(\xh)\;.\;\fh'(\xh,\uh) \gets \set{\bot}$
		\Statex $\forall \uh\in \Ush\;.\;\fh'(\bot,\uh) \gets \set{\bot}$
		\State $\Sysh'\gets \tup{\Xsh',\Ush,\fh'}$
		\State \Return $\SafetySynt(\Sysh',\Xsh = \Xsh'\setminus \set{\bot})$
	\end{algorithmic}
\end{algorithm}
\end{minipage}

\medskip
\noindent\textbf{Step A: Computing Atomic Safety Controllers.}
Suppose $G_i\subseteq \Xs$ be an atomic unsafe set of states of $\Sys$.
As described above, the abstract atomic safety specification $\safety_{\Sysh}(\Ghi)$ is the set of paths of $\Sysh$ that remain safe with respect to $\Ghi=Q(G_i) = \set{\xh\in\Xsh\mid \xh\cap G\neq \emptyset}$.
The respective \nmp abstract controller $\Conth_{\Ghi}$ (\nmp with respect to $\Sysh$) can be computed using a standard iterative procedure from the literature~\cite{vidal2000controlled} and presented using the function \SafetySynt in Algorithm~\ref{alg:safety control}.
\SafetySynt uses the set $S$ as an over-approximation of the set of states from which the safety specification can be fulfilled (aka, controlled invariant set).
Initially $S$ spans the entire state space $\Xsh$ of $\Sysh$ (Line~\ref{line:safety control a}).
Afterwards, the over-approximation $S$ is iteratively refined (the do-while loop) as states from the current $S$ are discarded owing to inability of fulfilling safety from them.
This is implemented using the $\cpre\colon 2^{\Xsh}\to 2^{\Xsh}$ operator defined as 
$
\cpre(S) \coloneqq \set{ \xh\in\Xsh\mid \exists \uh\in\Ush\;.\;\fh(\xh,\uh)\subseteq S }.
$
When no more refinement of $S$ is possible, we stop the iteration and extract the safety controller $\Conth_{\Ghi}$ (Line~\ref{line:safety control c}) as the one that keep the abstract system inside $S$.
It is guaranteed that $\Conth_{\Ghi}$ is an \nmp safety controller of $\Sysh$ for $\safety_{\Sysh}(\Ghi)$, and that its refinement is a nonblocking safety controller for $\safety_{\Sys}(G_i)$.
Unfortunately, the maximal permissiveness is not guaranteed with respect to $\Sys$, as explained in Remark~\ref{rem:incompleteness of ABC}.

\medskip
\noindent\textbf{Step B1: Computing the Product.}
Computing the product involves the straightforward application of Definition~\ref{def:controller composition} on two abstract safety controllers.

\medskip
\noindent\textbf{Step B2: Computing Largest Nonblocking Sub-Controllers.}
The largest nonblocking sub-controller is computed using the function \NBSynt, presented in Algorithm~\ref{alg:nonblocking control}.
\NBSynt first modifies $\Sysh$ to a new system $\Sysh'$ by keeping those transitions that are allowed by $\Conth$, and redirecting the rest to a new sink state $\bot$ (Line~\ref{line:nonblocking control a}).
With this modification, any safety controller of $\Sysh'$ that is nonblocking and avoids the unsafe state $\bot$ is by construction a nonblocking sub-controller of $\Conth$.
If the subcontroller is in addition maximally permissive with respect to the unsafe state $\bot$, then it follows that it is the largest nonblocking sub-controller of $\Conth$.
Therefore, the largest nonblocking sub-controller of $\Conth$ is obtained by invoking the subroutine \SafetySynt with arguments $\Sysh'$ and $\safety_{\Sysh'}(\Xsh)$.

In contrast, the pure online shielding algorithm would run $\SafetySynt(\Sysh,\safety_{\Sys}(\Xsh))$ at each step, which is significantly slower compared to executing the steps B1 and B2 described above.
This is because in B2 (dominates B1), each invocation of $\cpre(\cdot)$ in \SafetySynt is significantly faster on $\Sysh'$ compared to invoking $\cpre(\cdot)$ on $\Sysh$ (the pure online case), as the complexity of $\cpre(\cdot)$ is linear in the number of transitions of the abstract system, and this number effectively becomes small for $\Sysh'$ since we can  ignore all the transitions that lead to ``$\bot$'' (\emph{surely} unsafe transitions).
The smaller effective number of transitions also contributes to a smaller number of iterations of the while loop in \SafetySynt, creating a compounding effect in reducing the overall complexity.

\subsection{Symbolic Implementation}

Our dynamic safety controller synthesis algorithm is implemented symbolically using BDDs, where the states and inputs and transitions of abstract control systems are modeled using boolean formulas represented by BDDs, and all the steps of Algorithm~\ref{alg:safety control} and \ref{alg:nonblocking control} and the product operation are implemented using logical operations over the BDDs and existential and universal quantifications.
The implementation details follow standard procedures used by ABC algorithms from the literature~\cite{rungger2016scots,majumdar2023flexible}. 
In particular, our tool is built upon the ABC tool Mascot-SDS~\cite{majumdar2023flexible}, which supports efficient, parallelized BDD libraries like Sylvan~\cite{van2015sylvan}.
These implementation details enabled us to create a prototype dynamic shielding tool that whose offline computation stage takes a few minutes, and, more importantly, the online computations finish within just a few seconds on an average at each step.
More details on the experiments are included in Section~\ref{sec:experiments}.

\section{Dynamic Shields for Robot Navigation in\\ Unknown Territories}

We consider a mobile robot placed in an unknown world filled with static obstacles.
The robot is controlled by an unknown AI controller with unknown motives.
We want to design a shield whose safety objective is to avoid colliding with the obstacles at all time.
We assume that the shield only has limited observation of the world, and it can only observe obstacles that are within a certain distance $d$ along each dimension of the X-Y coordinate axes.
This creates a \emph{visible region} that is a square whose sides have the length $2d$ centered around the current location of the robot at each time step.
This is a realistic scenario experienced by many mobile agents, including self-driving cars and exploratory robots.

The dynamic shield assumes that the robot's state space spans only the size of the visible region.
At the design time, the shield assumes that obstacles can be arranged in all possible ways within this visible region at each step.
At runtime, the shield observes the obstacles in the current snapshot of visible region, and does its online computations to quickly deploy the suitable shield.
This shield is deployed just for the current time step.
In the next step, the obstacle arrangements may have shifted, because the robot and its visible region has moved, and therefore the shield must dynamically adapt and recompute the safe control inputs.
And the process repeats.

\begin{wrapfigure}{r}{4.2cm}
\vspace{-1cm}
	\includegraphics[scale=0.17, clip, trim=0cm 4cm 5cm 0cm]{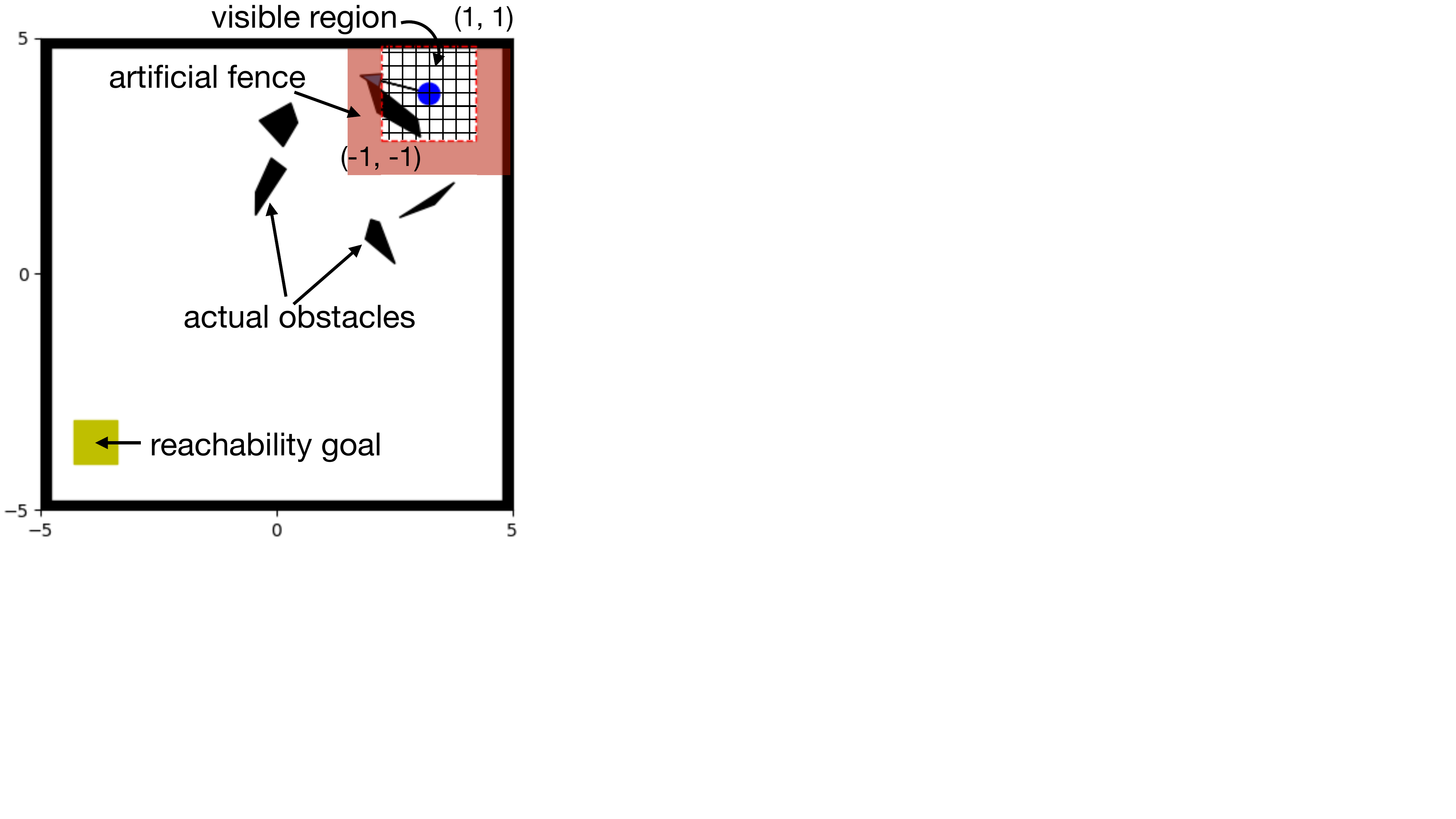}
	\vspace{-2cm}
	\caption{Illustration of dynamic shielding of the robot (blue dot) in an unknown environment. 
	Dynamic shields are computed using ABC, but not over the entire state space, rather over the the tiny visible region of the robot.
	The corners ``$(-1,-1)$'' and ``$(1,1)$'' of the visible region are in the robot's own reference coordinates. }
\vspace{-0.5cm}
\end{wrapfigure}
We need to ensure a \emph{safe handover} of two dynamically adapted shields at consecutive steps, i.e., every action that a shield allows must guarantee that the new state is in the domain of the shield at the next step.
We take a conservative approach.
We add \emph{artificial fences} in the outer periphery of the visible region, to model the uncertainty that awaits in the unobservable parts.
This is the worst possible scenario for which the shield must be prepared in the next step.
As the robot moves one step after the shield has acted, some states that were previously outside of the visible region may become visible, and some of the states that were previously assumed as unsafe will turn out to actually be safe, thus guaranteeing a safe handover.

We summarize the setting of the dynamic shield synthesis problem to be also used in Section~\ref{sec:experiments}.
The state space of the robot spans the visible region extended with the artificial fences of a given thickness $\epsilon>0$, i.e., $\Xs=[-d-\epsilon,d+\epsilon]\times [-d-\epsilon,d+\epsilon]$ with respect to the robot's own reference coordinate frame, in which the robot's initial state is always the origin.
We use the grid-based abstraction $\Xsh$ of $\Xs$, and assume that each element of $\Xsh$ is an atomic \emph{unsafe} state set.
This is because at runtime, no matter what obstacles are encountered within the visible region, it can be over-approximated as the union of the right subset of $\Xsh$. 
In addition, the fence is included in each atomic unsafe set.
Therefore, the set of atomic \emph{safe} sets is $\set{\Xs\setminus (\set{\xh}\cup \mathit{fence})\mid \xh\in\Xsh}$ where $\mathit{fence}=\Xs\setminus [-d,d]\times [-d,d]$.

\section{Experiments}
\label{sec:experiments}

\noindent\textbf{The Experimental Setup.}
The dynamics of the mobile robot is modeled using the discrete-time Dubins vehicle model.
The system has three state variables $x$, $y$, and $\theta$, where $x$ and $y$ represent the location in the X-Y coordinate, and $\theta$ represents the heading angle in radians (measured counter-clockwise from the positive X axis); two control input variables $v$ and $a$, representing the forward velocity and the angular velocity of the steering; and three disturbance variables $w^1, w^2, w^3$, which affect the dynamics in the three individual states.
The transitions are:
\begin{align*}
	x' &= x + (v\cos\theta)\tau + w^1, \qquad
	y' &= y + (v\sin\theta)\tau + w^2, \qquad
	\theta' &= \theta + a\tau + w^3,
\end{align*}
where the primed variables on the left side represent the states at the next time step, and $\tau$ represents the sampling time.
We use the following spaces for the states, control inputs, and disturbance inputs:
$x\in [-1,1]$, $y\in [-1,1]$, $\theta\in [-\pi,\pi]$, $v\in \{-0.4,-0.2,\ldots,0.2,0.4\}$, $a\in \{-4,-3.5,\ldots,3.5,4\}$, $w^1\in [-0.01,0.01]$, $w^2\in [-0.01,0.01]$, and $w^3\in [-0.02,0.02]$.
Furthermore, we fix $\tau=0.1\,s$, and the thickness of the fence to $\epsilon=0.3$.

The underlying AI controller is generated using reinforcement learning (RL) with reach-avoid objectives.
In our experiments, the RL controller is made aware of the entire map, even though the shield's visibility range is limited to a tiny region ranging $[-1,1]\times [-1,1]$ in its own reference frame.

\medskip
\begin{wraptable}{r}{8cm}
\vspace{-0.5cm}
	\caption{Computation times of the offline phase (atomic shield synthesis) of dynamic shields.}
	\label{table:offline computation times}
	\begin{tabular}{wc{2.5cm} wc{1.5cm}|wc{2cm} wc{1.5cm}}
	\toprule
		\multicolumn{2}{c|}{Grid size} & \multicolumn{2}{c}{Computation time} \\
		\midrule
		$\Xsh$ & $\Ush$	&	Abstraction	&	Synthesis \\
		\midrule
		$[0.10, 0.10, 0.30]$	&	$[0.2,0.5]$ & $2\, m\ 12\, s$ & $1\, m$ \\
		$[0.08, 0.08, 0.25]$	&	$[0.2,0.5]$ & $4\, m\,40\ s$ & $2\,m\ 35\,s$ \\
		$[0.06, 0.06, 0.20]$	&	$[0.2,0.5]$ & $9\, m\ 35\, s$ & $9\, m\ 25\, s$ \\
		\bottomrule
	\end{tabular}
\vspace{-0.7cm}
\end{wraptable}
\noindent\textbf{Performance Evaluations.}
We report the offline and online computation times of our dynamic shields for three different levels of abstraction coarseness used in the ABC algorithms.
The abstraction coarseness is measured as the (uniform) grid size used for discretizing the state and input spaces, which are respectively $3$ and $2$-dimensional vectors representing the dimension-wise side lengths of the square-shaped grid elements.
All synthesized shields are by-construction safe and minimally permissive, and therefore these aspects are not reported.
The code was run on a personal computer powered by Intel Core Ultra 7 255U processor and 32 GB RAM.

We report the offline computation times for the three different abstractions in Table~\ref{table:offline computation times}.
As expected, as the abstraction gets finer (smaller grid sizes), the computation time increases.
Nonetheless, all computation finished within reasonable amount of time.
In comparison, the pure offline baseline would timeout even for the coarsest abstraction, because with its X-Y state variables' grid sizes $[0.10,0.10]$, it would create $20\times 20=400$ grid cells in the domain $[-1,1]\times [-1,1]$, and since we choose the number of atomic safety specifications to be equal to  the number of grid cells, we would need to solve $2^{400}$ instances of safety controller synthesis problems!
Although the pure online baseline takes zero time in the offline phase, it will take more time at the online phase as we discuss next.

For each of the three abstraction classes, we deployed the pure online and dynamic shields alongside a learned controller and tested them on 70 randomly generated reach-avoid control problem instances.
In each instance and for each shield, we measured the computation time per step on an average, and report them in Figure~\ref{fig:experiments:timing comparison}.
We observe that the dynamic shields are almost always faster than the pure online shields, and as the abstraction gets finer, their difference becomes more prominent.
With the finest abstraction, the dynamic shield was upto five times faster!
Furthermore, any efficiency improvement of the pure online shield would benefit the dynamic shield too, because both rely on the \SafetySynt algorithm for their online computation phase.

\pgfplotsset{compat=1.13}
\begin{figure}
%	\resizebox{0.3\columnwidth}{!}{%
	\centering
	\begin{tikzpicture}
		\begin{axis}[xmin=0, xmax=3, ymin=0, ymax=3.5,scale=0.4,clip=false,
					ylabel=pure online ($s$), xlabel=dynamic ($s$)]
			\addplot+[only marks, mark color=blue] table [x=avg_computationAdaptive, y=avg_computationBaseline, col sep=comma] {PLOT_DATA/results_eta_1_1_3.csv};
			\draw	(axis cs: 0,0)	--	(axis cs: 3,3);
			\draw	(axis cs: 0,0)	--	(axis cs: 1.75,3.5);
			\draw	(axis cs: 0,0)	--	(axis cs: 1.1667,3.5);
			
			\node	at	(axis cs: 2.8,3.1)	{$\mathbf{1\times}$};
			\node	at	(axis cs: 1.75,3.8)	{$\mathbf{2\times}$};
			\node	at	(axis cs: 1.1667, 3.8)	{$\mathbf{3\times}$};
		\end{axis}
	\end{tikzpicture}
%	}
	\hfill
%	\resizebox{0.3\columnwidth}{!}{%
	\begin{tikzpicture}
		\begin{axis}[xmin=0, xmax=5, ymin=0, ymax=10,scale=0.4,clip=false,
					ylabel=pure online ($s$), xlabel=dynamic ($s$)]
			\addplot+[only marks, mark color=blue] table [x=avg_computationAdaptive, y=avg_computationBaseline, col sep=comma] {PLOT_DATA/results_eta_08_08_25.csv};
			\draw	(axis cs: 0,0)	--	(axis cs: 5,5);
			\draw	(axis cs: 0,0)	--	(axis cs: 5,10);
			\draw	(axis cs: 0,0)	--	(axis cs: 3.333,10);
			
			\node	at	(axis cs: 4.6,5.3)	{$\mathbf{1\times}$};
			\node	at	(axis cs: 4.8,10.7)	{$\mathbf{2\times}$};
			\node	at	(axis cs: 3.333, 10.7)	{$\mathbf{3\times}$};
		\end{axis}
	\end{tikzpicture}
%	}
	\hfill
%	\resizebox{0.3\columnwidth}{!}{%
	\begin{tikzpicture}
		\begin{axis}[xmin=0, xmax=10, ymin=0, ymax=15,scale=0.4,clip=false,
					ylabel=pure online ($s$), xlabel=dynamic ($s$)]
			\addplot+[only marks, mark color=blue] table [x=avg_computationAdaptive, y=avg_computationBaseline, col sep=comma] {PLOT_DATA/results_eta_06_06_2.csv};
			\draw	(axis cs: 0,0)	--	(axis cs: 10,10);
			\draw	(axis cs: 0,0)	--	(axis cs: 7.5,15);
			\draw	(axis cs: 0,0)	--	(axis cs: 5,15);
			\draw	(axis cs: 0,0)	--	(axis cs: 3.75,15);
			\draw	(axis cs: 0,0)	--	(axis cs: 3,15);
			
			\node	at	(axis cs: 9.3,8)	{$\mathbf{1\times}$};
			\node	at	(axis cs: 7.5,16)	{$\mathbf{2\times}$};
			\node	at	(axis cs: 5.3, 16)	{$\mathbf{3\times}$};
			\node	at	(axis cs: 3.75, 16)	{$\mathbf{4\times}$};
			\node	at	(axis cs: 2, 14)	{$\mathbf{5\times}$};
		\end{axis}
	\end{tikzpicture}
%	}
	\caption{Average online computation times of the pure online algorithm (the baseline) and our dynamic algorithm.
	Each point in the scatter plots represents one randomly generated problem instance for the navigation task.
	The three plots correspond to three different abstraction granularities used in the ABC procedure, with the leftmost plot representing the coarsest (Row 1, Table~\ref{table:offline computation times}) and the rightmost plot representing the finest (Row 3, Table~\ref{table:offline computation times}) abstraction sizes.}
	\label{fig:experiments:timing comparison}
	\vspace{-1cm}
\end{figure}
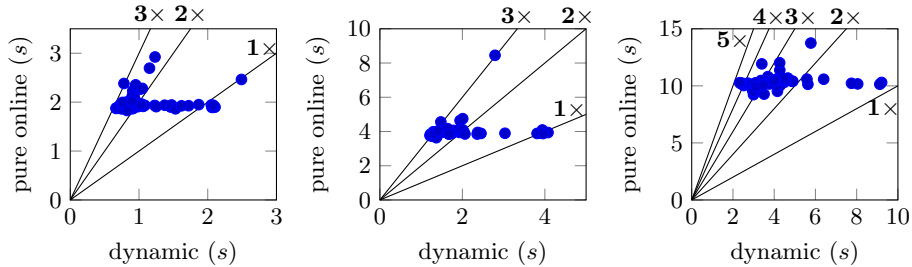

\section{Discussions and Future Work}

We propose dynamic shields that can adapt to changing safety specifications at runtime.
While the problem can be solved using pure offline or pure online approaches using brute force, our dynamic approach is significantly more efficient and combines both offline and online computations.
We presented concrete algorithms using the abstraction-based control framework, and demonstrated the effectiveness of dynamic shields on a robot motion planning problem.

Several future directions exist.
Firstly, in our work, we use the atomic safe set as it is given, and we will investigate if this set can be first processed in a way that the online deployment phase of shield computation can be benefited (e.g., by simplifying the nonblockingness process).
Secondly, in our simulations of the experiments, sometimes the system would get stuck and would not be able to make progress.
This is a known issue in shielding and we will study how to eliminate this by taking inspiration from other works dealing with similar problems~\cite{konighofer2023online}.
Thirdly, we proposed a conservative but simple approach to the safe handover problem, and more advanced procedures~\cite{nayak2023context} will be incorporated in subsequent versions.
Finally, extending to richer settings, like dynamic obstacles and quantitative safety specifications would be a major step.

\begin{credits}
	\subsubsection{\ackname}
	This work was funded in part by the DECO Project (PID2022-138072OB-I00) funded
by MCIN/AEI/10.13039/501100011033 and by the ESF+.
\end{credits}

%
% ---- Bibliography ----
%
% BibTeX users should specify bibliography style 'splncs04'.
% References will then be sorted and formatted in the correct style.
%
 \bibliographystyle{splncs04}
 \bibliography{refs}
\end{document}